%%%%%%%%%%%%%%%%%%%%%%%%%%%%%%%%%%%%%%%%%%%%%%%%%%%%%%%%%%%%%%%%%%%%%%%%%%%%%%%%
%2345678901234567890123456789012345678901234567890123456789012345678901234567890
%        1         2         3         4         5         6         7         8

\documentclass[letterpaper, 10 pt, conference]{ieeeconf}  % Comment this line out if you need a4paper

\IEEEoverridecommandlockouts                              % This command is only needed if 
                                                          % you want to use the \thanks command

\overrideIEEEmargins                                      % Needed to meet printer requirements.

%In case you encounter the following error:
%Error 1010 The PDF file may be corrupt (unable to open PDF file) OR
%Error 1000 An error occurred while parsing a contents stream. Unable to analyze the PDF file.
%This is a known problem with pdfLaTeX conversion filter. The file cannot be opened with acrobat reader
%Please use one of the alternatives below to circumvent this error by uncommenting one or the other
%\pdfobjcompresslevel=0
%\pdfminorversion=4

% See the \addtolength command later in the file to balance the column lengths
% on the last page of the document
%
%% The following packages can be found on http:\\www.ctan.org
%\usepackage{graphics} % for pdf, bitmapped graphics files
%\usepackage{epsfig} % for postscript graphics files
%\usepackage{mathptmx} % assumes new font selection scheme installed
%\usepackage{times} % assumes new font selection scheme installed
%\usepackage{amsmath} % assumes amsmath package installed
%\usepackage{amssymb}  % assumes amsmath package installed
%\usepackage{mathrsfs}

% The following packages can be found on http:\\www.ctan.org
\usepackage{graphics} % for pdf, bitmapped graphics files
\usepackage{epsfig} % for postscript graphics files
\usepackage{mathptmx} % assumes new font selection scheme installed
\usepackage{times} % assumes new font selection scheme installed
\usepackage{amsmath} % assumes amsmath package installed
\usepackage{amssymb}  % assumes amsmath package installed
\usepackage{leftidx}
\usepackage{mathtools}
\usepackage{tensor}
\usepackage{theorem}
\usepackage[english, ruled, vlined]{algorithm2e}
\usepackage{algorithmic}
\usepackage{mathtools}
\usepackage{thmtools}
\usepackage{thm-restate}
\usepackage{cleveref}

\declaretheorem[name=Remark]{remark}

\declaretheorem[name=Claim]{claim}
\declaretheorem[name=Note]{note}

\usepackage{authblk}
\usepackage{cite}
\usepackage{eucal}
\usepackage{subcaption}

\usepackage[inline]{enumitem}

\usepackage{xcolor}
\setlength{\marginparwidth}{0.5in}

\setlength{\textfloatsep}{0pt}

\begin{document}
\title{Perception-aware time optimal path parameterization for quadrotors}
\author{Igor~Spasojevic*,
        Varun~Murali*,
        and~Sertac~Karaman% <-this % stops a space
\thanks{*I. Spasojevic and V. Murali contributed to this work equally.}%
\thanks{All authors are with the Department of Aeronautics and Astronautics and the Laboratory for Information and Decision Systems, Massachusetts Institute of Technology (MIT), Cambridge, MA 02139, USA.	
{\tt\small $\{$igorspas, mvarun, sertac$\}$@mit.edu}}% <-this % stops a space
}
\maketitle
\thispagestyle{empty}
\pagestyle{empty}

\begin{abstract}
%As the popularity of quadrotors has risen, they have given rise to a vehicles that are predominantly perception driven. 
The increasing popularity of quadrotors has given rise to a class of predominantly vision-driven vehicles. 
This paper addresses the problem of perception-aware time optimal path parametrization for quadrotors. 
Although many different choices of perceptual modalities are available, the low weight and power budgets of quadrotor systems makes a camera ideal for on-board navigation and estimation algorithms.
However, this does come with a set of challenges. 
The limited field of view of the camera can restrict the visibility of salient regions in the environment, which dictates the necessity to consider perception and planning jointly.
The main contribution of this paper is an \emph{efficient time optimal path parametrization algorithm for quadrotors with limited field of view constraints.}
We show in a simulation study that a state-of-the-art controller can track planned trajectories, and we validate the proposed algorithm on a quadrotor platform in experiments.
\end{abstract}

\section{Introduction}

\if0
\begin{itemize}
\item First paragraph needs to be changed to a more general motivation.
\item First paragraph  -- account for quality future state estimation using partial (available) information and on-board sensors.
\item Squared speed profile in the introduction.
\item Maybe remove paper organization in the end if necessary?
\item Clear statement of contributions at the end of introduction.
\end{itemize}
\fi

Rapid progress in autonomous aerial vehicles has secured their place in diverse applications, ranging from automated delivery, environmental monitoring, search and rescue to aerial cinematography.
A substantial amount of research is being devoted towards enabling them to perform tasks more accurately, more robustly, at operational speeds without any human oversight.
A key component to realizing such goals involves developing systems capable of simultaneously executing specified tasks while ensuring accuracy of future state estimates, obtained using solely percepts from on-board sensors.
As an example, consider the task of the AlphaPilot challenge \cite{AlphaPilot} (shown in Fig. \ref{fig:AlphaPilot}), which involved minimizing the time required by an autonomous quadcopter to complete a given racecourse. 

In this paper, we deal with light weight quadrotor aerial vehicles equipped with an inertial measurement unit (IMU) and a camera. 
An IMU supplies measurements of acceleration and angular velocity, which yield accurate state estimates over short time scales. 
However, over long horizons, such estimates accrue large drift which can be significantly reduced by using the differential bearing to landmarks, visually salient regions in the environment, captured by the on-board camera.     

A challenge of taking advantage of an on-board camera stems from its limited field of view and the underactuated nature of the quadrotor dynamics. Indeed, the four motors of the quadrotor can only exert thrust parallel to its body $z$ axis, implying its translational motion is intimately coupled with its orientation. In particular, a change in acceleration generally necessitates a change in attitude, and as a result, a change in the set of landmarks within field of view of the camera. The latter requirement imposes additional constraints on the trajectory planning task, which is already challenging due to the dimensionality of the state space of the quadrotor. 

A pivotal work \cite{MellingerKumar} demonstrated differential flatness \cite{Fliess} of quadrotor dynamics, allowing efficient trajectory planning in a lower dimensional space. \cite{MellingerKumar} leverage this insight to design trajectories which visit a set of waypoints at given times by solving a constrained quadratic program, while \cite{RichterBryRoy} enhance the latter approach by considering an alternate parametrization which results in a more numerically robust algorithm. \cite{HehnDAndrea} sample a suitable family of trajectories, incorporating \cite{MellingerKumar} to swiftly eliminate candidates which violate dynamic constraints. \cite{LiuAtanasovKumar} perform search in a subspace of flat outputs to generate collision-free trajectories which minimize a linear combination of execution time and the magnitude of jerk.

\begin{figure}[!t]
\centering
\includegraphics[width=8cm]{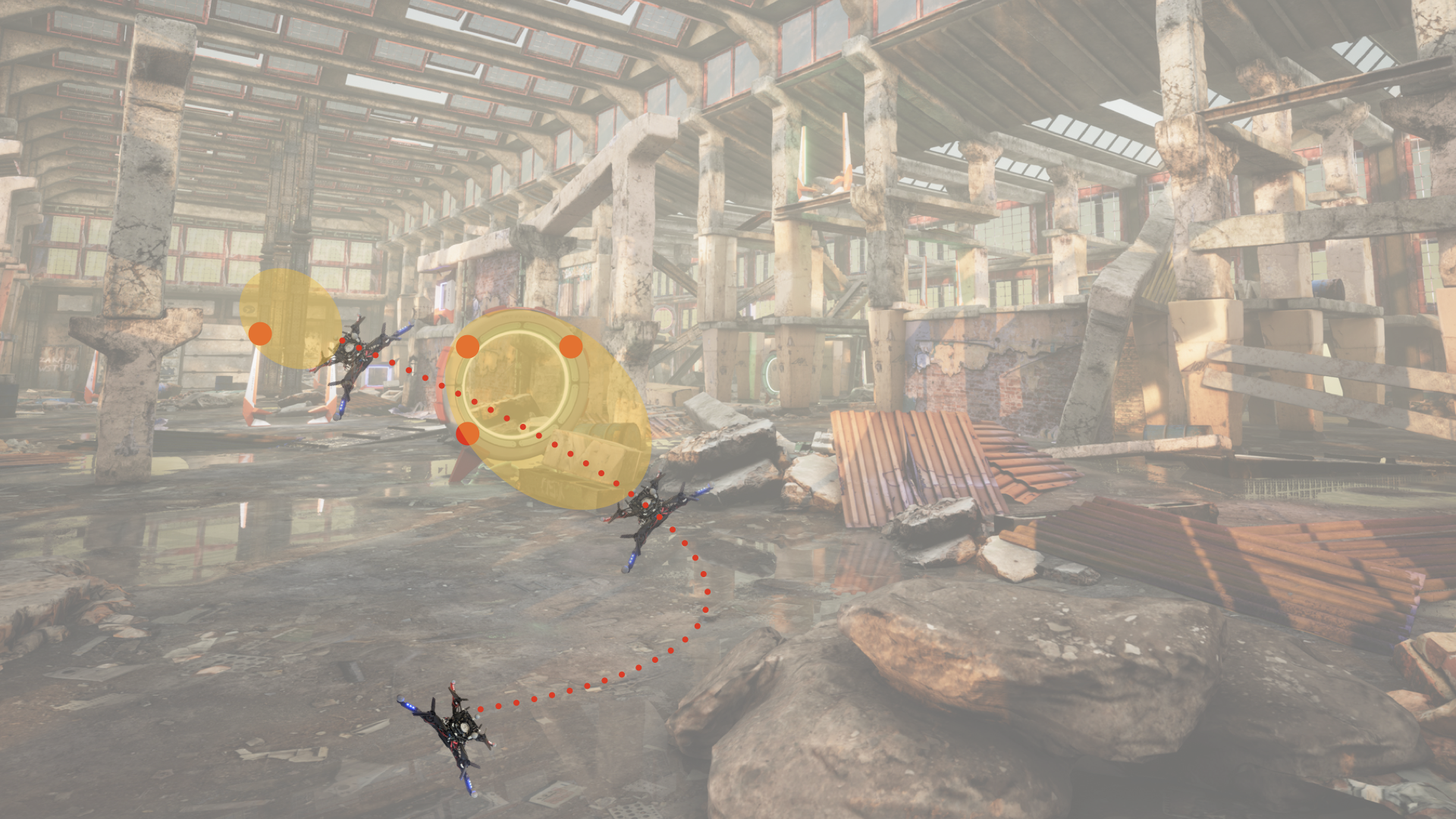}
\caption{In the figure above, the desired translational path is shown using the dotted line and the landmarks are depicted using red dots. The field of view is shown by the yellow ovals.}
\label{fig:AlphaPilot}
\end{figure}

However, none of the former approaches planned trajectories with perception constraints taken into account. \cite{PeninChaumette} plan minimum time trajectories in differentially flat space while taking into account the full dynamics of the quadrotor as well as requiring a particular target be maintained in field of view of the on-board camera. The output of their algorithm is a locally optimal dynamically feasible trajectory. \cite{SheckellsKobilarov} plan trajectories which guide the quadrotor towards a fixed final position while minimizing the integral of the magnitude of the reprojection error of a desired set of landmarks during the execution of the path. \cite{PAMPC} navigate the quadrotor between specified start and goal positions in an MPC framework, minimizing the speed and deviation of the projection of a particular point in the environment from the optical center of the on-board camera. \cite{MuraliACC} use differential flatness and optimize a combination of a co-visibility constraint which ensures the camera observes the change of bearing of a sufficient number of landmarks between consecutive keyframes, and the execution time of the trajectory. 

\if0
However, none of the previous approaches planned trajectories with perception constraints taken into account. \cite{} plan minimum time trajectories in differentially flat space while taking into account the full dynamics of the quadrotor as well as requiring a particular target be maintained in field of view. The output of their algorithm is a locally optimal dynamically feasible trajectory. \cite{} plan trajectories which guide the drone towards a fixed final position while minimizing the integral of the magnitude of the reprojection error of a desired set of features during the execution of the path. \cite{} navigate the drone between specified start and goal position in an MPC framework, minimizing the speed and deviation of the projection of a particular point in the environment from the optical center of the on-board camera. \cite{MuraliSpasojevic} use differential flatness and optimize a combination of a co-visibility constraint which ensures the camera observes the change of bearing of a sufficient number of landmarks between consecutive keyframes, and the execution time of the trajectory.

In this paper, we consider the problem of minimizing the time required by the quadcopter to execute a given path, while maintaining a given set of features within field of view. Our motivation for studying this problem is two-fold. First, a common approximate but efficient motion planning strategy involves splitting a trajectory planning task into a path planning task followed by a time parametrization of that path. Second, the efficiency of the method allows evaluation of arbitrary motion primitives with respect to perception constraints. The organization of this paper is as follows. In section \ref{sec:prelim} we give preliminaries on quadrotor dynamics and the perception model of the on-board camera. Section \ref{sec:approach} gives the approach, and lastly, experimental results are summarized in section \ref{sec:experiment}.
\fi

In this paper, we consider the problem of minimizing the time required by a quadrotor to execute a given path, while maintaining a given set of landmarks within field of view of its on-board camera. Our motivation for studying this problem comes from an approximate, but efficient, motion planning strategy that involves splitting a trajectory planning task into a path planning phase in a lower dimensional space which ensures collision avoidance, followed by a time parametrization of that path which respects kinodynamic constraints of the quadrotor \cite{lavalle_2006}. The contribution of this paper is an algorithm for the time optimal path parametrization problem for a quadrotor with field of view constraints, along with a derivation of the convexity such constraints induce in a suitable parametrization of the problem. 
 
%
%The organization of this paper is as follows. In section \ref{} we give preliminaries on quadrotor dynamics and the perception model of the on-board camera. Section \ref{} gives the approach, and lastly, experimental results are summarized in section \ref{}.

\section{Preliminaries and Problem Forumulation}
\label{sec:prelim}

\subsection{Coordinate Frame Conventions}

We use $\mathbf{p}^{\mathcal{A}} \in \mathbb{R}^3$ to denote the coordinates of an arbitrary point $\mathbf{p} \in \mathbb{R}^3$ with respect to frame $\mathcal{A}$, characterized by its origin $\mathcal{\mathbf{O}}_{\mathcal{A}} \in \mathbb{R}^3$ and orientation $R_{\mathcal{A}} \in SO(3)$. We define the transformation from frame $\mathcal{A}$ to frame $\mathcal{B}$ as an ordered pair $(\mathbf{t}_{\mathcal{B}}^{\mathcal{A}}, \ R_{\mathcal{B}}^{\mathcal{A}}) \in \mathbb{R}^3 \times SO(3)$, satisfying $\mathbf{t}_{\mathcal{B}}^{\mathcal{A}} = (R^{\mathcal{A}})^{-1} (\mathbf{O}_{\mathcal{B}} - \mathbf{O}_{\mathcal{A}})$ and $R_{\mathcal{B}}^{\mathcal{A}} = (R^{\mathcal{A}})^{-1}R_{\mathcal{B}}$. In other words, $\mathbf{t}_{\mathcal{B}}^{\mathcal{A}}$ and $R_{\mathcal{B}}^{\mathcal{A}}$ represent the displacement from the origin of $\mathcal{A}$ to the origin of $\mathcal{B}$ and the orientation of axes of $\mathcal{B}$, with respect to axes of $\mathcal{A}$, respectively. The relationship $\mathbf{p}^{\mathcal{A}} = \mathbf{t}_{\mathcal{B}}^{\mathcal{A}} + R_{\mathcal{B}}^{\mathcal{A}} \mathbf{p}^{\mathcal{B}}$ holds for an arbitrary point $\mathbf{p} \in \mathbb{R}^3$.

In this paper, we refer to three coordinate frames: $\mathcal{W}$, a fixed inertial world frame; $\mathcal{B}$, the frame attached to the body of the quadrotor (also refered to as the body frame); and finally, $\mathcal{C}$, the frame attached to the camera on board the quadrotor. 
We will assume the camera is rigidly attached to the quadrotor, and so the transformation $(\mathbf{t}_{\mathcal{C}}^{\mathcal{B}}, R_{\mathcal{C}}^{\mathcal{B}})$ will be fixed. As a result, $(\mathbf{t}_{\mathcal{C}}^{\mathcal{W}}, R_{\mathcal{C}}^{\mathcal{W}})$ will be a function of $(\mathbf{t}_{\mathcal{B}}^{\mathcal{W}}, R_{\mathcal{B}}^{\mathcal{W}})$.

\subsection{Dynamics Model} \label{sec:Dynamics}

The pose of the quadrotor is given by $(\textbf{x}_{\mathcal{B}}^{\mathcal{W}},R_{\mathcal{B}}^{\mathcal{W}}) \in \mathbb{R}^3 \times SO(3)$. 
Its state $\mathbf{q}$ additionally contains $\mathbf{v}$ and $\mathbf{\omega}$, the translational and angular velocity, respectively, and satisfies 
\begin{equation} \label{eq:DroneDynamics}
\underbrace{
\begin{bmatrix}
\dot{\mathbf{x}} \\
\dot{\mathbf{v}} \\
\dot{R} \\
\dot{\mathbf{\omega}} \\
\end{bmatrix}}_{\dot{\mathbf{q}}}
= \begin{bmatrix}
\mathbf{v} \\
\mathbf{g} \\ 
R [\mathbf{\omega}]_{\times} \\ 
-J^{-1} \ \mathbf{\omega} \times J \mathbf{\omega}
\end{bmatrix}
+
\begin{bmatrix}
0 & 0 \\
R \mathbf{e}_3  & 0 \\ 
0 & 0 \\ 
0 & J^{-1} \\
\end{bmatrix}
\underbrace{
\begin{bmatrix}
c \\
\mathbf{\tau}\\
\end{bmatrix}}_{\text{effective input}}
\end{equation}
where $J$ is the mass-normalized moment of inertia with respect to the body frame of the quadrotor. 
The input of the system consists of four mass-normalized motor thrusts $\mathbf{c} = [c_1 \ c_2 \ c_3 \ c_4]^T$, manifested through the resulting thrust $c \in \mathbb{R}$, and torque $\tau \in \mathbb{R}^3$, satisfying
\begin{equation} \label{eq:Thrust2ThrustTorque}
\begin{aligned}
\begin{bmatrix}
c \\
\mathbf{\tau} \\
\end{bmatrix}
= 
\underbrace{
\begin{bmatrix}
1 & 1 & 1 & 1 \\
-k_L & k_L & k_L & -k_L\\
-k_L & -k_L & k_L & k_L\\
-k_M & k_M & -k_M & k_M\\
\end{bmatrix}
}_{=: F}
\begin{bmatrix}
c_1 \\
c_2 \\
c_3 \\
c_4 \\
\end{bmatrix}
\end{aligned}
\end{equation}
for suitable choices of $k_L, \ k_M > 0$.  
Typically, motors can only supply a finite interval of positive thrusts which we model with constraints $c_{min} \leq c_i \leq c_{max}$ for all $1 \leq i \leq 4$. 
%
%The given model of a quadrotor is known to be differentially flat \cite{MellingerKumar}.
%This implies that the trajectory of states and inputs can be recovered as algebraic functions of a finite number of derivatives of trajectories of special functions of the states, known as differentially flat outputs. 
%In our case, the latter comprise of the position $\mathbf{x}$, and heading angle $\mathbf{\psi}$ of the quadrotor.  

The differential flatness of the quadrotor \cite{MellingerKumar} means that the trajectory of its states and inputs can be uniquely recovered from algebraic functions of a finite number of derivatives of trajectories of its position $\mathbf{x}$ and heading $\mathbf{\psi}$:
\begin{equation} \label{eq:DifferentiallyFlatTrafo}
\small{
\begin{bmatrix}
\mathbf{q}(t) \\
\mathbf{c}(t) \\
\end{bmatrix}
 = \Phi \left( \left( \frac{d^n}{dt^n} \mathbf{x}(t) \right)_{n = 0,1,2,3,4} , \left( \frac{d^n}{dt^n} \mathbf{\psi}(t) \right)_{n = 0,1,2} \right)}
\end{equation}  

%
%Here, $x$, $v$, and $R$ represent the position, velocity, and orientation of the UAV with respect to an inertial world frame, respectively. Its angular velocity and angular acceleration with respect to its body frame are denoted by $\omega$ and $\dot{\omega}$, respectively. The input of the robot consists of thrust, $c$, directed along its body $z$ axis, and torque $\tau$.

%Here, $x$, $v$, and $R$ represent the position, velocity and orientation of the UAV with respect to an inertial world frame, respectively. The angular velocity of the UAV, expressed with respect to its body fixed frame, is denoted by $\omega$. 

%$R$ represents the orientation of the body frame with respect to the inertial world frame, and finally

\subsection{Perception Model} \label{sec:perceptionModel}

%camera model, preamble; %camera model, equations 
We model the on-board camera using the pinhole projection principle. 
A point with coordinates $\mathbf{p}^{\mathcal{C}}$, with $\mathbf{p}_3^{\mathcal{C}} \geq 0$, is registered as a point $\left( \frac{\mathbf{p}_1^{\mathcal{C}}}{\mathbf{p}_3^{\mathcal{C}}}, \frac{\mathbf{p}_2^{\mathcal{C}}}{\mathbf{p}_3^{\mathcal{C}}} \right)$, provided it lies inside the region corresponding to the sensor array of the camera. 
Typically, the latter region is a rectangle, characterizing the horizontal and vertical extent of the field of view. 
In our case, we model the field of view as a symmetric circular cone consisting of points $\mathbf{p} \in \mathbb{R}^3$ satisfying 
\begin{equation*}
%\begin{aligned}
\mathbf{p}_3^{\mathcal{C}} \geq 0, \ \frac{|| [\mathbf{p}_1^{\mathcal{C}} \ \mathbf{p}_2^{\mathcal{C}}]^T ||_2}{\mathbf{p}_3^{\mathcal{C}}} \leq \tan \alpha,
%\end{aligned}
\end{equation*}
for some fixed angle $\alpha \in (0, \frac{\pi}{2})$. Depending on the choice of $\alpha$, the latter relation can represent both conservative and relaxed characterizations of the field of view, in addition to inducing mathematical structure that can yield more efficient planning algorithms.

\subsection{Problem Formulation}
\label{sec:problem_formulation}

Assume we are given a smooth regular path $(\gamma, \psi) : [0, S_{end}] \rightarrow \mathbb{R}^3 \times S^1$, a set $S = \{l_1,...,l_n\}$ of stationary landmarks with known coordinates $\{\mathbf{l}_1^{\mathcal{W}}, ...., \mathbf{l}_n^{\mathcal{W}}\}$, and a map $\mathcal{M} : [0, S_{end}] \rightarrow 2^{S}$ specifying the desired subset of landmarks the on-board camera should have within field of view at every point along the path. 
Additionally, assume we are given a trajectory $\mathbf{n},\beta : [0, S_{end}] \rightarrow S^2 \times [0,\pi)$ of unit vectors and angles, denoting constraints that the body $z$ axis of the quadrotor should form an angle at most $\beta(s)$ with vector $\mathbf{n}(s)$  when its center of mass is located at position $\gamma(s)$. 
For example, the latter requirement arises in planning trajectories in which the quadrotor, modeled as an oblate spheroid, has to pass through narrow gaps \cite{falanga2017aggressive, Sayre-McCordKaraman}. 
Our problem consists of finding the shortest interval during which $\gamma$ can be traversed subject to aforementioned requirements as well as the constraint that the resulting trajectory be dynamically feasible.

The objective is to find the minimum $T > 0$ for which there exists a sufficiently smooth, strictly increasing map $s : [0, T] \rightarrow [0, S_{end}]$ so that trajectories of position and heading of the quadrotor defined by $\mathbf{x}(t) = \gamma(s(t))$ and $\mathbf{\psi}(t) = \psi(s(t))$ for all $t \in [0, T]$, respectively, induce a feasible trajectory. 
We define the set-valued map $\Lambda : \mathbb{R}^3 \times SO(3) \rightarrow 2^{\mathbb{R}^3}$ denoting the field of view of the camera on-board the quadrotor when the latter is in a given pose. 
We solve Problem $1$:
\begin{equation}
\begin{aligned}
\min & \ T \\ 
& s.t. \  \exists s : [0,T] \rightarrow [0, S_{end}] \\
& s(0) = 0, \ s(T) = S_{end} \\
%& s \ \text{is strictly increasing, twice differentiable} \\
& s \text{ is } \begin{cases}
\text{strictly increasing} \\
\text{twice differentiable}
\end{cases} \\
& \text{for all } t \in [0,T]:\\
& \hspace{5mm} \mathbf{x}(t) = \gamma(s(t)) \\
& \hspace{5mm} {
\begin{bmatrix}
\mathbf{q}(t) \\
\mathbf{c}(t) \\
\end{bmatrix}
 = \Phi \left( \left( \frac{d^n}{dt^n} \mathbf{x}(t) \right)_{n \leq 4} , \left( \frac{d^n}{dt^n} \mathbf{\psi}(t) \right)_{n \leq 2} \right)} \\
%& \hspace{10mm} R(t) = R(\dot{p}(t), \ddot{p}(t), \psi(t)) \\
& \hspace{5mm} l_j \in \Lambda(t) \ \forall l_j \in \mathcal{M}(s(t)) \\
& \hspace{5mm} c_{min} \leq \textbf{c}(t) \leq c_{max}.
\end{aligned}
\end{equation} 
%- make sure to say what the map from flat outputs to rotations is \\
%- definitions of differential flatness \\
%- pre-planning algorithm \\ 

\section{Approach}
\label{sec:approach}

We assume the input to our algorithm is a regular, $C^2$ path of differentially flat outputs. 
Typically, such paths are represented as piecewise polynomial functions of an abstract real parameter \cite{MellingerKumar, RichterBryRoy}, although any sufficiently smooth path whose derivatives can be queried easily would suffice. 
Our approach consists of three stages. 
First, we will obtain an optimal time parametrization for a quadrotor with point mass dynamics. 
Then we will smooth the resulting parametrization in a suitable way in order to proceed with a solution for the original problem.

We will start by solving the time optimal path parametrization problem for a point mass model of a quadrotor with bounded total thrust. 
A convenient reparametrization of this problem involves rephrasing it in terms of a new decision variable, the square speed profile $h : [0, S_{end}] \rightarrow [0, +\infty)$ \cite{PfeifferJohani}. 
The link between $s$ from Section \ref{sec:problem_formulation} and $h$ is given by $h(s(t)) = \left( \frac{ds}{dt} \vert_{s^{-1}(t)} \right)^2$. 
The execution time, $\int_{0}^{S_{end}} \frac{ds}{\sqrt{h(s)}}$, is a convex function of $h$, and typical bounds on velocity and acceleration, given by 
\begin{equation*}
\mathbf{v}(s) = \gamma'(s) \sqrt{h(s)} \text{ and } \mathbf{a}(s) = \frac{1}{2}\gamma'(s)h'(s) + \gamma''(s) h(s), 
\end{equation*}
respectively, induce convex constraints on $h$ \cite{VerscheureDemeulenaere, LippBoyd}. For example, a bound on the maximum velocity is equivalent to a bound on $h$, whereas a bound on maximum total thrust amounts to 
\begin{equation*}
\left|\left| \frac{1}{2}\gamma'(s)h'(s) + \gamma''(s) h(s) - \mathbf{g} \right|\right|_2 \ \leq \ 4c_{max}, \ \forall s \in [0,S_{end}].
\end{equation*} 
Furthermore, in Section \ref{sec:convexity}, we will show that the requirement that landmark $l_j$ lies within field of view of the camera on board the quadrotor located at position $\gamma(s)$ and having heading vector $\mathbf{\psi}(s)$ reduces to the convex constraint
\begin{equation*}
(\mathbf{a}(s) - \mathbf{g}) \cdot (\mathbf{\delta} \times \mathbf{\psi^{\perp}}(s)) \ \geq \ \chi(s) \ \left|\left| \mathbf{\psi^{\perp}}(s) \times (\mathbf{a}(s) - \mathbf{g}) \right|\right|_2,
\end{equation*}
where $\mathbf{\psi^{\perp}}(s) := \mathbf{z_W} \times \mathbf{\psi}(s)$, and $\mathbf{\delta} \in \mathbb{R}^3$ and $\chi(s) > 0$ are suitable quantities depending only on $\mathbf{l_j}^W, \ \gamma(s),$ and $\mathbf{\psi}(s)$. Similarly, the constraint that the body $z$ axis of the quadrotor forms an angle no larger than $\beta$ with a unit vector $\mathbf{n}$ translates into 
\begin{equation*}
%$
(\mathbf{a}(s) - \mathbf{g}) \cdot \mathbf{n} \  \geq \ \cos \beta \ || \mathbf{a}(s)- \mathbf{g} ||_2.
%$
\end{equation*}

In summary, the relations above show that various constraints on kinematics and attitude of the quadrotor, at any particular point $s$ along the path, may be written in the form $\textbf{C}(s, h(s) ,h'(s)) \leq 0$ for an appropriate function $\textbf{C}$ which is component-wise jointly convex in its second and third variables. 
This motivates the formulation of the following problem \cite{VerscheureDemeulenaere, TOPPRA, HungarianManipulators, ConvexWaiter, LippBoyd}, which will be solved in the first as well as third stage of the approach, for suitable choices of $\textbf{C}$:
%To summarize, we solve Problem $1$: [TODO:explain ]

\begin{equation*} \label{ContinuousParameterProblem}
\begin{aligned}
& \underset{h : [a,b] \rightarrow [0,\infty)}{\text{minimize}}
& & \int_{a}^{b} \frac{ds}{\sqrt{h(s)}} \\
& \text{subject to}
& & \textbf{C}(s, h(s), h'(s)) \leq 0, \ \forall s \in [a, b), \\
&&& 0 \leq h(s) \leq B_u(s), \ s \in [a, b].
\end{aligned}
\end{equation*} 
%for component-wise convex functions $(C(s,\cdot \ ,\cdot \ ))_{s \in [0, S_{end}]}$.  

Although the latter problem can be tackled by any generic convex optimization package, it possesses additional structure making it amenable to more efficient approaches \cite{HungarianManipulators, ConvexWaiter, TOPPRA}. 
One such algorithm \cite{TOPPRA} recovers values of the optimal profile at a sequence of points $0 = s_0 < \cdots < s_n = S_{end}$. 
The convexity of the problem implies that at an arbitrary discretization point $s_i$, the sets of square speeds that can be extended to feasible profiles on $[s_i, S_{end}]$ and $[0, s_i]$ are both intervals. 
The algorithm leverages this insight to recover an asymptotically optimal solution \cite{SpasojevicCDC} consisting of upper endpoints of such intervals, obtained incrementally in a pair of sweeps through $(s_i)_{i = 0}^n$ , as detailed in stage one of Algorithms \ref{alg:backward-forward} and \ref{alg:propogation-procedure}.

%
%
%The time-complexity of Algorithm \ref{BackwardForwardAlgorithm} is $O(|D| E)$, where $E$ is the complexity of solving the one-dimensional optimization problem in every iteration of the for loops. Typically, $f^{+}(s, \cdot) = \underset{1 \leq j \leq m}{\min}f_j^{+}(s, \cdot)$ ($ f^{-}(s, \cdot) = \underset{1 \leq j \leq m}{\max}f_j^{-}(s, \cdot)$) is specified as the minimum (maximum) of $m$ concave (convex) functions, where $m$ denotes the number of constraints. In this case, $E = O(C m \log m)$ \cite{HungarianManipulators}, where $C$ is the complexity of obtaining lower

\begin{algorithm}[h]
\SetAlgoLined 
\KwData{$D = (s_i)_{i=0}^n$, $(B_l(s_i))_{i = 1}^n$, $(B_u(s_i))_{i = 1}^n$, $\psi^{\perp}$, $\gamma$, $\gamma'$,\\ \hspace{20mm} $\gamma'', \ (M(s_i))_{i = 0}^{n}, \ (n_i)_{i = 0}^{n}, \ (\beta_i)_{i = 0}^n,  \ c_{max}$} 

\KwResult{$(h_i)_{i=0}^n$}

\For{$i = 0$ to $n$}{
$l_i = B_l(s_i), \ h_i = B_u(s_i)$ \\
$R(s_i), \ R'(s_i), \ R''(s_i) \ \leftarrow \ null$ \\
\For{$j \in M(s_i)$}{
$v_j^{(i)} = l_j^{w} - \gamma(s_i)$ \\
$\chi_j^{(i)} = d \sin^2 \alpha + \cos \alpha \sqrt{|| v_j^{(i)} ||_2^2 - d^2 \sin^2 \alpha}$\\}
}

\For{stage = 1, 2}{

\For{$i = n-1$ to $0$}{

$l_{i}, \ h_{i} = Prop(stage, \ forward, \ \psi^{\perp}(s_i),$ \\
\hspace{20mm} $\gamma'(s_i), \ \gamma''(s_i), \ (v_j^{(i)})_{j = 1}^{| M(s_i) |},$ \\
\hspace{20mm} $(\chi_j^{(i)})_{j = 1}^{| M(s_i) |}, \ n_i, \ \beta_i, \ c_{max}, \ l_i, \ h_i,$ \\ 
\hspace{20mm} $ l_{i+1}, \ h_{i+1},\ R(s_i), \ R'(s_i), \ R''(s_i))$

\If{backward step infeasible}{return infeasible}
}

\For{$i = 0$ to $n-1$}{

$l_{i+1}, \ h_{i+1} = Prop(stage, \ forward, \ \psi^{\perp}(s_i),$ \\
\hspace{20mm} $\gamma'(s_i), \ \gamma''(s_i), \ (v_j^{(i)})_{j = 1}^{| M(s_i) |},$ \\
\hspace{20mm} $(\chi_j^{(i)})_{j = 1}^{| M(s_i) |}, \ n_i, \ \beta_i, \ c_{max}, \ l_i, \ h_i,$ \\ 
\hspace{20mm} $ l_{i+1}, \ h_{i+1},\ R(s_i), \ R'(s_i), \ R''(s_i))$

\If{forward step infeasible}{return infeasible}
}

\If{stage = 1}{
$ (R(s_i), \ R'(s_i), \ R''(s_i))_{i = 0}^n \ \leftarrow \ \ Smoothen((h_i)_{i = 0}^n,$\\ 
$(\gamma'(s_i), \ \gamma''(s_i))_{i = 0}^n, (\psi^{\perp}(s_i),\  \psi^{\perp'}(s_i), \ \psi^{\perp''}(s_i))_{i = 0}^n)$
}
}

\Return{$(h_i)_{i=0}^n$}
\caption{Backward-Forward Algorithm}
\label{alg:backward-forward}
\end{algorithm}

\begin{algorithm}
\SetAlgoLined
\KwData{$stage, \ type, \ \psi^{\perp}, \ \gamma', \ \gamma'', (v_j^{(i)})_{j = 1}^{| M|}, (\chi_j^{(i)})_{j = 1}^{|M|}, $ \\ $ n_i, \ \beta_i, \ c_{max}, \ l_i, \ h_i, \ l_{i+1}, \ h_{i+1}$}
\KwResult{$(l^*, h^*)$}

%$t = [\mathbf{1}(type = backward), \ \mathbf{1}(type = forward) ]$ \\
\eIf{type = backward}{
$t = [1 \ 0]^T$\\
}{
$t = [0 \ 1]^T$\\
}

%\vspace{2mm}

\eIf{stage = 1}{
$\left\{\begin{array}{lr}
        l^* \\
        h^* \\
        \end{array}\right\} = \left\{\begin{array}{lr}
        \min \\
        \max \\
        \end{array}\right\} \  [h \ \tilde{h}]\ t^T$ \\
\vspace{2mm}
$ s.t. \ c = \frac{1}{2}\gamma' \ \frac{\tilde{h}-h}{\Delta s} + \gamma'' \ h - g$ \\
\vspace{1mm}
$ \hspace{6mm} c \cdot (v_j^{(i)} \times \psi^{\perp}) \geq  \chi_j^{(i)} || \psi^{\perp} \times c ||_2 \ \forall j \in M$ \\
\vspace{1mm}
$ \hspace{6mm} c \cdot n \geq \cos \beta_i \ || c ||_2$ \\
\vspace{1mm}
$ \hspace{6mm} || c ||_2 \leq c_{max}$ \\
\vspace{1mm}
$ \hspace{6mm} l_{i+1} \leq \tilde{h} \leq h_{i+1}, \ \ l_{i} \leq h \leq h_i $\\}
{
$\left\{\begin{array}{lr}
        l^* \\
        h^* \\
        \end{array}\right\} = \left\{\begin{array}{lr}
        \min \\
        \max \\
        \end{array}\right\} \  [h \ \tilde{h}]\ t^T$ \\
\vspace{2mm}
$ s.t. \ c = \frac{1}{2}\gamma' \ \frac{\tilde{h}-h}{\Delta s} + \gamma'' \ h - g$ \\
\vspace{1mm}
$\hspace{6mm} c_{\parallel} = c \cdot R(s_i)e_3$ \\
\vspace{1mm}
$\hspace{6mm} \tau =  J^{-1} \left( \ \Gamma(s_i) \times J \Gamma(s_i) + \Gamma'(s_i)\right) h $ \\ 
$ \hspace{20mm} + \ J^{-1} \left(\frac{1}{2} \Gamma''(s_i) \right) \frac{\tilde{h} - h}{\Delta s} $\\
\vspace{1mm}
$\hspace{6mm} F^{-1} \begin{bmatrix}
c_{\parallel} \\
\tau
\end{bmatrix} \geq c_{min}, \hspace{3mm}
F^{-1} \begin{bmatrix}
c \\
\tau
\end{bmatrix} \leq c_{max}$
\vspace{1mm}
$ \hspace{6mm} c \cdot (v_j^{(i)} \times \psi^{\perp}) \geq  \chi_j^{(i)} || \psi^{\perp} \times c ||_2 \ \forall j \in M$ \\
\vspace{1mm}
$ \hspace{6mm} c \cdot n \geq \cos \beta_i \ || c ||_2$ \\
\vspace{1mm}
$ \hspace{6mm} l_{i+1} \leq \tilde{h} \leq h_{i+1}, \ \ l_{i} \leq h \leq h_i $\\
}

\Return{$(l^*, h^*)$}
\caption{Propagation Procedure}
\label{alg:propogation-procedure}
\end{algorithm}

%
%- talk about smoothing; questions you might like to consider include: why is there a need for smoothing? how do you do smoothing? what do you gain from smoothing? \\

The output of the first stage of Algorithm \ref{alg:backward-forward} yields an optimal square speed profile assuming the thrust vector of the drone as input. 
Thus, the optimum will be a piecewise continuous function, and (as a result) so will the trajectory of planned body $z$ axes of the quadrotor. 
However, typical control algorithms assume the trajectory of orientations varies smoothly with time.

% which is problematic because the vector thrust is necessarily parallel to the body $z$ axis which must vary continuously with time. 

We overcome this challenge using the procedure \emph{Smoothen}, which convolves $\mathbf{z_B}(\cdot)$, the computed piecewise continuous trajectory of body $z$ axes (unit vectors directed along calculated inputs), with a Gaussian kernel of appropriate width. 
Concretely, the convolution of the trajectory $\mathbf{z_B} : [0,S_{end}] \rightarrow S^2$ with a Gaussian kernel of width $\sigma$,  $f_{\sigma}(s) = \frac{1}{\sqrt{2 \pi \sigma^2}} e^{-\frac{s^2}{2 \sigma^2}}$, is defined via  
\[
\mathbf{\tilde{z}_B}(s) = \int_{-\infty}^{+\infty} \mathbf{z_B}(u) f_{\sigma}(s-u) du.
\]
The latter function is $C^{\infty}$ with 
\[
\frac{d^n}{ds^n}\mathbf{\tilde{z}_B}(s) = \int_{-\infty}^{+\infty} \mathbf{z_B}(u) \frac{d^{n}}{ds^n}f_{\sigma}(s-u) du
\] 
for all $n \in \mathbb{N}$. 
We ensure a $C^{\infty}$ estimate of the trajectory of (normalized) $z$ axes by setting $\mathbf{\hat{z}_B}(s) = \frac{\mathbf{\tilde{z}_B}(s)}{|| \mathbf{\tilde{z}_B }(s)||_2}$ for all $s \in [0, S_{end}]$. 
The smooth estimate of the optimal trajectory of body $z$ axes, $\mathbf{\hat{z}_B}(s)$, together with specified heading vectors, $\mathbf{\psi}(s)$, uniquely defines a smooth trajectory of orientations $R(\cdot)$ as well as their higher order derivatives $R'(\cdot)$ and $R''(\cdot)$. 

At this stage, we face two hurdles:
\begin{enumerate*}[label=(\roman*)]
\item due to the underactuated nature of quadrotor dynamics, the trajectory $\mathbf{\hat{z}_B}(\cdot)$ of smoothened $z$ axes might not be realizable by any square speed profile, regardless of the range of thrusts the individual motors can exert; 
\item the algorithm has yet to account for the motor thrusts required to vary orientations of the quadrotor in the desired manner.
\end{enumerate*}

%The first problem is due to the fact that the trajectory $\hat{z}_B(s)$ might not be realizable by any square speed profile, regardless of the range of thrusts the individual motors can exert. The second problem stems from the fact that the algorithm has not weighed in the thrust required to vary orientations of the drone in the desired manner. We will tackle both issue by performing a second round of the forward-backward algorithm, modified to include additional constraints to address the aforementioned issues.

We begin by considering (i). Suppose $\mathbf{\overline{z}_B}(\cdot)$ is the trajectory of body $z$ axes of the quadrotor.
 The equation for acceleration implies 
\begin{equation} 
\mathbf{\overline{z}_B}(s) \times \left( \frac{1}{2}\gamma'(s) h'(s) + \gamma''(s) h(s) - \mathbf{g} \right) = \textbf{0}
\label{eq:nonHolonomyConstraints}
\end{equation}
which yields $\mathbf{\overline{z}_B}(s) \cdot (\gamma''(s) \times \gamma'(s)) \ h(s) = \mathbf{\overline{z}_B}(s) \cdot (\mathbf{g} \times \gamma'(s))$, as well as $\mathbf{\overline{z}_B}(s) \cdot (\gamma'(s) \times \gamma''(s)) \ h'(s) = 2 \ \mathbf{\overline{z}_B}(s) \cdot (\mathbf{g} \times \gamma''(s))$ for all $s \in [0, S_{end}]$. In particular, both the speed and acceleration of the quadrotor following a given path are fully determined by its body $z$ axis. 
Thus, for every $s \in [0, S_{end}]$, $\mathbf{\overline{z}_B}'(s)$ must be parallel to a vector depending on $\gamma'(s), \gamma''(s)$ and $\mathbf{\overline{z}_B}(s)$. 
Since this constraint is not necessarily satisfied by $\mathbf{\hat{z}_B}(\cdot)$, the second round of the backward-forward algorithm includes a relaxation of the constraint in Equation \ref{eq:nonHolonomyConstraints}, namely, 
\begin{equation*}
\left| \left|  \mathbf{\hat{z}_B}(s) \times \left( \frac{1}{2}\gamma'(s) h'(s) + \gamma''(s) h(s) - \mathbf{g} \right) \right| \right|_2 \leq \eta 
\end{equation*}
for some fixed (small) parameter $\eta > 0$.

%- to address the second problem, do the second round of toppra where you make use of estimates of rotations as well as their derivatives to give bounds on individual thrusts exerted by the motors
%
%- for the above you need to derive in as few lines as possible the constraints induced by orientations and make sure to give the references for this section 
%

(ii) stems from the fact that the first round of the backward-forward algorithm models the dynamics of the quadrotor as a point mass. 
However, the orientation of the quadrotor is important as it can only exert thrust along its body $z$ axis. 
%Changing the direction of thrust requires changing attitude, which in general requires exerting torque effected by differential thrusts of individual motors. 
%This in turn limits the thrust available for translational acceleration, directly impacting the path following task. 
   
We address the latter by assuming that the zeroth, first, and second derivatives of actual orientations are well approximated by quantities $R(s), \ R'(s),$ and $R''(s)$ computed by the smoothing procedure. 
We extend a derivation from \cite{RBMotions}, which allow us to formulate bounds on thrust of individual motors as convex constraints on $h$ and $h'$. Defining $\Gamma : [0, S_{end}] \rightarrow \mathbb{R}^3$ via 
$
[\Gamma(s)]_{\times} = R(s)^T R'(s)
$ for all $s \in [0,S_{end}]$, the equation for angular velocity yields
\begin{equation*}
\begin{aligned}
\begin{rcases}
\dot{R} = R(s) \ [\omega]_{\times} \\
\dot{R} = R'(s) \ \sqrt{h(s)}
\end{rcases} &  \Rightarrow 
[\omega]_{\times} = R(s)^T R'(s) \sqrt{h(s)} \\
& \Rightarrow \omega(s) = \Gamma(s) \sqrt{h(s)} \\
& \Rightarrow \dot{\omega}(s) = \frac{1}{2} \Gamma''(s) h'(s) + \Gamma'(s) h(s), 
\end{aligned}
\end{equation*} 
implying the total torque 
\begin{equation*}
\begin{aligned}
\tau(s) & = J^{-1} ( \omega \times J \omega + J \dot{\omega} ) \\
& =  \left(J^{-1} \ \Gamma(s) \times J \Gamma(s) + \Gamma'(s)\right) h(s) + \left(\frac{1}{2} J^{-1} \Gamma''(s) \right) h'(s) \\
\end{aligned}
\end{equation*} 
is a linear function of $h$ and $h'$. Finally, we recover the thrust 
\begin{equation*}
%$
\begin{aligned}
c = (R(s)\mathbf{e}_3) \cdot \left(\frac{1}{2}\gamma'(s)h'(s) + \gamma''(s) h(s) - \mathbf{g} \right),
\end{aligned}
%$
\end{equation*} 
an affine function of $h$ and $h'$. Putting this together, we get that the constraint that the motor thrusts lie in the range $[c_{min}, c_{max}]$ translates into the constraint $c_{min}\leq F^{-1}[c \ \tau]^T \leq c_{max}$, which is convex in $h$, meaning the last stage of the approach can be dealt with effectively by stage two of Algorithms \ref{alg:backward-forward} and \ref{alg:propogation-procedure}.

%- make sure to add in the pseudocode for the extension of the forward and backward passes for toppra

\section{Convexity}
\label{sec:convexity}

In this Section, we will show that attitude requirements arising from collision avoidance constraints and perception objectives translate into convex constraints on the square speed profile. 
We will assume the setup of a forward facing camera with an axially symmetric conical field of view as described in Section \ref{sec:perceptionModel}. 
Throughout this section we will use the relation for total thrust 
\begin{equation*}
c(s) = \frac{1}{2}\gamma'(s)h'(s) + \gamma''(s)h(s) - \mathbf{g}
\end{equation*}
%The focus of this section involves deriving the fact that the requirement of keeping a desired landmark within field of view at any particular point along the path translates into a convex constraint on the squared speed profile. 

\begin{note} \normalfont
Due to the closure of convex sets under arbitrary intersections, the following claim \if0 \ref{cl:fovConvexity} \fi implies that the requirement that an arbitrary region of space belongs to the field of view of the camera on board the quadrotor at an arbitrary subset of points along the path, induces a convex constraint on the feasible square speed profile. 
This is particularly useful in scenarios where we require a \textit{set of different} landmarks to be kept within field of view at all times.   
\ignorespaces
\end{note}

\begin{claim}
Consider any fixed $s \in [0,S_{end}]$, unit heading vector $\mathbf{\psi}$ lying in the world $x-y$ plane, and stationary landmark with coordinates $\mathbf{l}^{\mathcal{W}}$. 
The requirement that the landmark lies within field of view of the camera on board the quadrotor with position $\gamma(s)$ and heading vector $\psi$ induces a convex constraint on the squared speed profile $h$.
\label{cl:fovConvexity}
\ignorespaces
\end{claim}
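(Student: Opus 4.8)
\emph{Proof plan.} The goal is to reduce the field-of-view membership condition to the homogeneous inequality
$$
c \cdot (\mathbf{v} \times \psi^{\perp}) \ \geq \ \chi \, \| \psi^{\perp} \times c \|_2,
$$
advertised in Section \ref{sec:approach}, where $c = \tfrac{1}{2}\gamma'(s)h'(s) + \gamma''(s)h(s) - \mathbf{g}$ is the mass-normalized thrust vector, $\mathbf{v} := \mathbf{l}^{\mathcal{W}} - \gamma(s)$, and $\psi^{\perp} := \mathbf{z_W} \times \psi$, and then to read off convexity directly from this form. First I would record that, because the quadrotor can only thrust along its body $z$ axis, the differential-flatness map forces $\mathbf{z_B} = c/\|c\|_2$; together with the prescribed heading $\psi$ this determines the full attitude, hence the world-frame optical axis $\mathbf{z_C}$ of the rigidly mounted forward-facing camera. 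The central geometric computation is to show $\mathbf{z_C} = (\psi^{\perp} \times c)/\| \psi^{\perp} \times c \|_2$. I would verify this from the flatness construction and sanity-check it in hover, where $c = -\mathbf{g} \parallel \mathbf{z_W}$ gives $\psi^{\perp} \times c \parallel \psi$, i.e. the camera looks along the heading, as a forward-facing camera should; and under forward acceleration one checks $\mathbf{z_C}$ tilts downward, again as expected.

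Next I would translate the conical field-of-view membership into an angle condition on $\mathbf{z_C}$. Accounting for the fixed offset $d$ of the camera along its optical axis, the cone condition $\tfrac{\| [\mathbf{p}_1^{\mathcal{C}} \ \mathbf{p}_2^{\mathcal{C}}]^T \|_2}{\mathbf{p}_3^{\mathcal{C}}} \leq \tan\alpha$ for the landmark is equivalent to $\mathbf{v} \cdot \mathbf{z_C} \geq \chi$, where solving the resulting quadratic in the scalar $\mathbf{v} \cdot \mathbf{z_C}$ yields $\chi = d\sin^2\alpha + \cos\alpha \sqrt{\|\mathbf{v}\|_2^2 - d^2 \sin^2\alpha} > 0$ (this is exactly the quantity $\chi_j^{(i)}$ computed in Algorithm \ref{alg:propogation-procedure}). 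Clearing the strictly positive denominator $\| \psi^{\perp} \times c \|_2$ and using the scalar-triple-product identity $\mathbf{v} \cdot (\psi^{\perp} \times c) = c \cdot (\mathbf{v} \times \psi^{\perp})$ then produces precisely the displayed inequality.

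Finally I would establish convexity. Since $c$ is affine in the pair $(h(s), h'(s))$, the left-hand side $c \cdot (\mathbf{v} \times \psi^{\perp})$ is affine, while $\psi^{\perp} \times c$ is affine and so $\chi \, \| \psi^{\perp} \times c \|_2$ is convex, being a nonnegative multiple of the Euclidean norm of an affine map. Rearranging, the constraint reads $\chi \, \| \psi^{\perp} \times c \|_2 - c \cdot (\mathbf{v} \times \psi^{\perp}) \leq 0$, whose left side is convex (convex minus affine); its sublevel set in $(h(s), h'(s))$ is therefore convex, and because $h(s)$ and $h'(s)$ are linear functionals of the decision variable $h$, the constraint is convex in $h$. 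The preceding Note then upgrades this pointwise, single-landmark statement to arbitrary sets of landmarks and points along the path by closure of convex sets under intersection.

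The main obstacle is the geometric step: proving $\mathbf{z_C} \parallel \psi^{\perp} \times c$ and verifying that clearing the denominator leaves a \emph{norm of an affine function} of $(h, h')$ on the right. It is exactly this cancellation of the normalizing factor $\| \psi^{\perp} \times c \|_2$ against the cone's $\cos\alpha$ scaling that converts an a priori nonconvex angle constraint into a convex one; without the specific forward-facing mounting this structure would be lost. A minor technical point to handle is the degenerate configuration $\psi^{\perp} \times c = \mathbf{0}$ (thrust aligned with $\psi^{\perp}$), in which $\mathbf{z_C}$ is undefined; this does not arise in feasible flight, since $c$ always retains a vertical component inherited from $\mathbf{g}$, so the denominator stays bounded away from zero.
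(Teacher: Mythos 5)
Your proposal is correct and follows essentially the same route as the paper: identify the optical axis as the normalized $\psi^{\perp} \times c$ fixed by flatness and the heading, reduce the offset conical field-of-view condition to $(\mathbf{l}^{\mathcal{W}} - \gamma(s))^T \mathbf{x}_{\mathcal{B}} \geq \chi(s)$ with the same $\chi$, clear the positive denominator via the scalar triple product, and conclude convexity because $c$ is affine in $(h(s), h'(s))$ so the constraint is affine minus a nonnegative multiple of a norm of an affine map. The only differences are cosmetic --- you solve the quadratic in $\mathbf{v} \cdot \mathbf{z}_{\mathcal{C}}$ where the paper runs a chain of algebraic equivalences, and you additionally flag the degenerate case $\psi^{\perp} \times c = \mathbf{0}$, which the paper leaves implicit.
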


\begin{proof}
The assumption that the optical center of the on-board camera lies along the body $x$ axis of the quadrotor implies $\mathbf{p}_{\mathcal{C}}^{\mathcal{W}} = \mathbf{\gamma}(s) +  d \mathbf{x}_{\mathcal{B}}$ for a fixed $d > 0$.
%The position of the camera is $p^C = p^W + d x_B$ for some fixed $d > 0$.
The requirement that $\mathbf{l}^{\mathcal{W}}$ lies within field of view amounts to
\begin{equation*} 
\begin{aligned}
\alpha \geq \angle (\mathbf{l}^{\mathcal{W}} - \mathbf{p}_{\mathcal{C}}^{\mathcal{W}},\  \mathbf{x}_{\mathcal{B}}) \ = \  \angle (\mathbf{l}^{\mathcal{W}} - & \mathbf{\gamma}(s) - d\mathbf{x}_{\mathcal{B}}, \ \mathbf{x}_{\mathcal{B}}) \\
& \Leftrightarrow \\
(\mathbf{l}^{\mathcal{W}} - \mathbf{\gamma}(s) - d\mathbf{x}_{\mathcal{B}})^T \mathbf{x}_{\mathcal{B}}  \geq   \cos \alpha  \ ||  \mathbf{l}^{\mathcal{W}} - & \mathbf{\gamma}(s) - d\mathbf{x}_{\mathcal{B}} ||_2 \ || \mathbf{x}_{\mathcal{B}} ||_2 \\
&  \Leftrightarrow \\
\frac{1}{d} ( \mathbf{l}^{\mathcal{W}} - \gamma(s) - d \mathbf{x}_{\mathcal{B}})^T (d \mathbf{x}_{\mathcal{B}}) \ \geq \ \cos \alpha & \ || \mathbf{l}^{\mathcal{W}} - \gamma(s) - d \mathbf{x}_{\mathcal{B}} ||_2  \\
& \Leftrightarrow\\
|| \mathbf{l}^{\mathcal{W}} - \gamma(s) ||_2^2  - || \mathbf{l}^{\mathcal{W}} - \gamma(s) - d \mathbf{x}_{\mathcal{B}} & ||_2^2 - d^2 || \mathbf{x}_{\mathcal{B}} ||_2^2 \ \geq \\ 
2  d \ \cos \alpha \ || \mathbf{l}^{\mathcal{W}} -  \gamma(s) - & d \mathbf{x}_{\mathcal{B}} ||_2  \\
& \Leftrightarrow \\
|| \mathbf{l}^{\mathcal{W}} - \gamma(s) ||_2^2 - d^2 \sin^2 \alpha  & \geq  \\
 (|| \mathbf{l}^{\mathcal{W}}  - \gamma(s)- d& \mathbf{x}_{\mathcal{B}} ||_2 + d \cos \alpha)^2  \\ 
& \Leftrightarrow \\
\sqrt{|| \mathbf{l}^{\mathcal{W}} - \gamma(s) ||_2^2 - d^2 \sin^2 \alpha} - d \cos \alpha &  \geq \\
|| \mathbf{l}^{\mathcal{W}} - & \gamma(s) - d \mathbf{x}_{\mathcal{B}}||_2 \\
& \Leftrightarrow \\
( \mathbf{l}^{\mathcal{W}} - \gamma(s))^T & \mathbf{x}_{\mathcal{B}} \geq \\
d \sin^2 \alpha + \cos \alpha  (|| \mathbf{l}^{\mathcal{W}} - & \gamma(s) ||_2^2 - d^2 \sin^2 \alpha)^{\frac{1}{2}} \\
&  \Leftrightarrow \\
c(s)^T ( (\mathbf{l}^{\mathcal{W}} - \gamma(s)) \times \psi^{\perp}(s) ) \ \geq \  \chi(s) &|| \psi^{\perp}(s) \times c(s) ||_2,
\end{aligned}
\end{equation*}
where $\chi(s) := d \sin^2 \alpha + \cos \alpha \sqrt{|| \mathbf{l^W} - \gamma(s) ||_2^2 - d^2 \sin^2 \alpha}$. Since $c$ is an affine function of $h$, the latter inequality induces a convex constraint on $c$, and thus on $h$. 
\ignorespaces
\end{proof}

\begin{remark} \normalfont
Note that by redefining the heading, convexity continues to hold for an arbitrary on-board camera whose optical axis lies in the conic span of the body $x$ and $y$ axes of the quadrotor. 
%However, the convexity of the problem can fail to hold when the optical axis no longer lies in the body $x-y$ plane of the quadrotor.
Additionally, the convexity is preserved with a camera aligned with the body $z$ axis of the quadrotor.
This can be shown by a simple adaptation of the following claim.
\ignorespaces
\end{remark}

\begin{claim} 
Consider any fixed $s \in [0,S_{end}]$, unit vector $\mathbf{n} \in \mathbb{R}^3$, and angle $\beta \in [0, \frac{\pi}{2})$. 
The requirement that the angle between $\mathbf{n}$ and the body $z$ axis of the quadrotor, located at $\gamma(s)$, is no larger than $\beta$ induces a convex constraint on the square speed profile $h$.
\label{cl:GapFitting}
\ignorespaces
\end{claim}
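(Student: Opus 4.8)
The plan is to mirror the final step of the proof of Claim~\ref{cl:fovConvexity}, recognizing that this attitude constraint is in fact simpler and reduces directly to a single second-order cone condition. First I would recall from the dynamics in~\eqref{eq:DroneDynamics} that the mass-normalized thrust acts purely along the body $z$ axis, so that the thrust vector $c(s) = \mathbf{a}(s) - \mathbf{g}$ is parallel to $\mathbf{z_B}(s) = R(s)\mathbf{e}_3$. Since the thrust magnitude $|| c(s) ||_2$ is strictly positive in flight, the two vectors point in the \emph{same} direction, not merely along the same line, which gives $\mathbf{z_B}(s) = c(s)/|| c(s) ||_2$.

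Next I would translate the geometric requirement $\angle(\mathbf{n}, \mathbf{z_B}(s)) \leq \beta$ into an inner-product inequality. Because $\mathbf{n}$ and $\mathbf{z_B}(s)$ are unit vectors and $\cos$ is strictly decreasing on $[0,\pi]$, the angle condition is equivalent to $\mathbf{n} \cdot \mathbf{z_B}(s) \geq \cos\beta$. Substituting $\mathbf{z_B}(s) = c(s)/|| c(s) ||_2$ and multiplying through by $|| c(s) ||_2 > 0$ yields exactly the inequality advertised in Section~\ref{sec:approach}, namely $\mathbf{n} \cdot c(s) \geq \cos\beta \, || c(s) ||_2$.

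Finally I would establish convexity. Writing the constraint as $\cos\beta \, || c(s) ||_2 - \mathbf{n}\cdot c(s) \leq 0$, the left-hand side is a convex function of $c(s)$, being the sum of a nonnegatively scaled Euclidean norm and an affine term; its sublevel set is therefore the rotated second-order (``ice-cream'') cone with axis $\mathbf{n}$ and half-angle $\beta$. Recalling from the Approach section that $c(s) = \tfrac{1}{2}\gamma'(s)h'(s) + \gamma''(s)h(s) - \mathbf{g}$ is an affine function of $(h(s), h'(s))$, the constraint is the preimage of a convex cone under an affine map, and hence convex in $h$.

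I expect the only subtlety, rather than a genuine obstacle, to lie in justifying that $\mathbf{z_B}$ and $c(s)$ share the same orientation, which requires the scalar thrust to be strictly positive and implicitly excludes the degenerate free-fall case $c(s) = \mathbf{0}$; once this is in hand, the argument is a direct specialization of the second-order-cone reduction already used for the field-of-view constraint in Claim~\ref{cl:fovConvexity}.
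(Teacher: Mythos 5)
Your proposal is correct and follows essentially the same route as the paper's own proof: translating the angle condition into the inner-product inequality $\mathbf{n}^T c(s) \geq \cos\beta \, \|c(s)\|_2$ via $\mathbf{z}_{\mathcal{B}}(s) \parallel c(s)$, and then invoking the second-order-cone structure together with the affine dependence of $c(s)$ on $(h(s), h'(s))$, exactly as in Claim~\ref{cl:fovConvexity}. Your explicit remark that the strictly positive thrust magnitude is needed to pin down the sign (excluding free fall, $c(s) = \mathbf{0}$) is a small justification the paper leaves implicit, but it does not change the argument.
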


\begin{proof} The constraint translates into
\begin{equation*}
\begin{aligned}
\beta \ \geq \ \angle ( \mathbf{n}, \mathbf{z}_{\mathcal{B}} ) \ & \Leftrightarrow \
\mathbf{n}^T \mathbf{z}_{\mathcal{B}} \ \geq \ \cos \beta \ || \mathbf{n} ||_2 \ || \mathbf{z}_{\mathcal{B}} ||_2 \\
& \Leftrightarrow \\
\mathbf{n}^T c(s) & \ \geq \ \cos \beta \  || c(s) ||_2,
\end{aligned}
\end{equation*}
which, as in the proof of Claim \ref{cl:fovConvexity}, induces a convex constraint on $c$, and thus on $h$.
\ignorespaces
\end{proof}

\begin{figure*}[!h]
    \centering
       \begin{subfigure}[t]{0.3\textwidth}
        \centering
        \includegraphics[trim=2.3cm 4cm 4cm 4cm, clip,width=6cm]{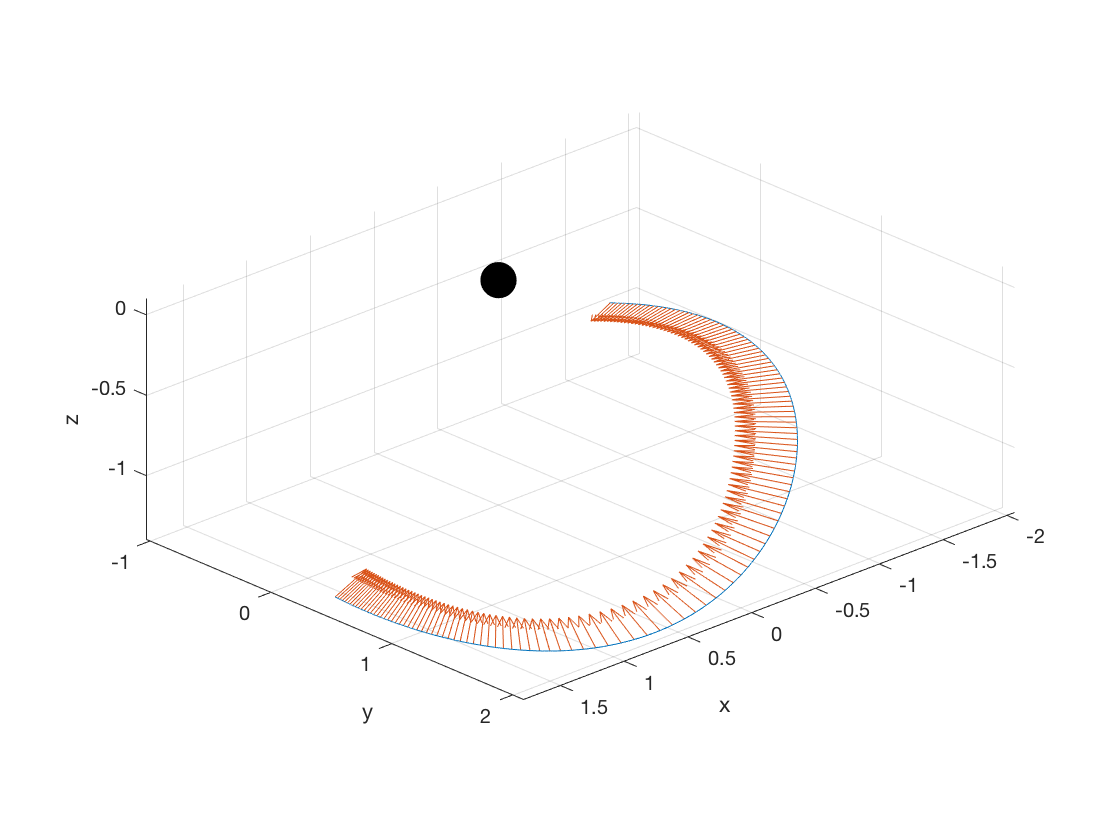}
        \caption{A sample trajectory generated by this method. The red arrows shows the attitude of the camera and the sphere represents the target to keep in view.}
        \label{fig:sim_explanation}
       \end{subfigure}
       ~
    \begin{subfigure}[t]{0.3\textwidth}
        \centering
        \includegraphics[trim=2.3cm 3.1cm 5cm 1.cm, clip, width=6cm]{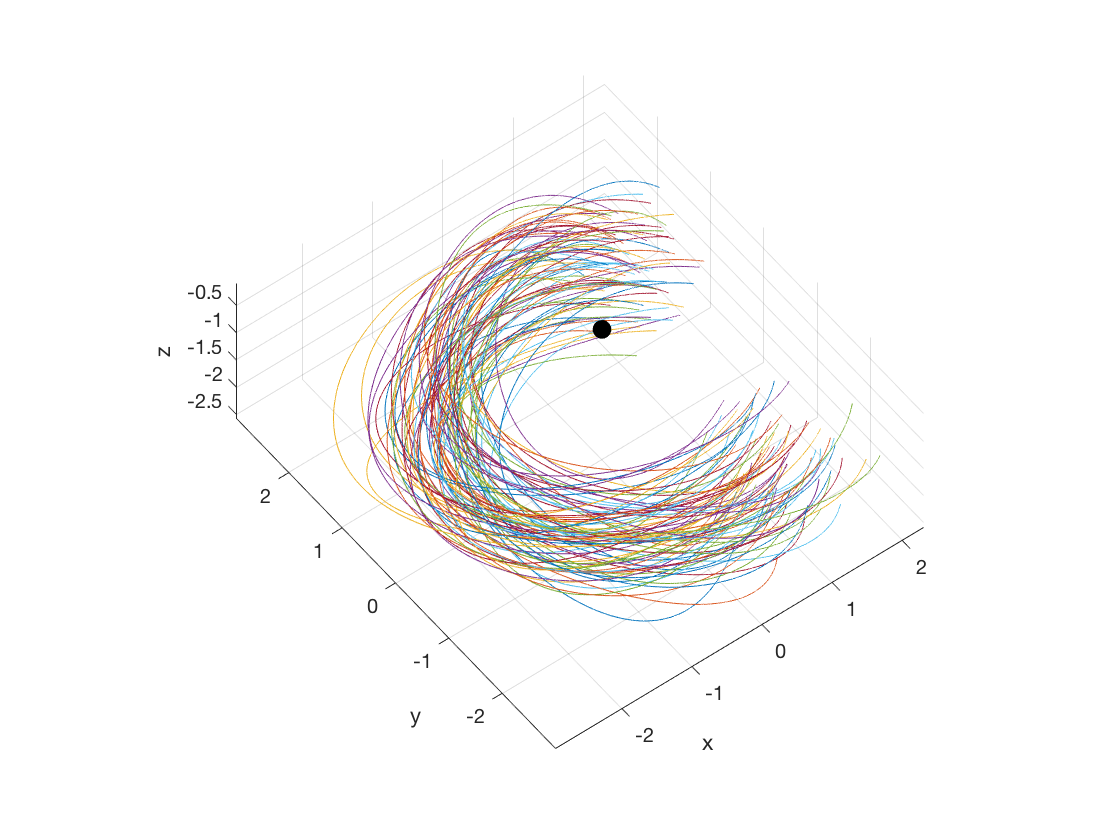}
        \caption{Trajectories generated using this approach. The sphere represents the target that is required to be kept in field of view.}
        \label{fig:squared_speed_prof}
           \end{subfigure}
      ~ 
     \begin{subfigure}[t]{0.3\textwidth}
        \centering
        \includegraphics[trim=2.3cm 1.cm 3.6cm 2cm, clip, width=4cm, height = 3.4cm]{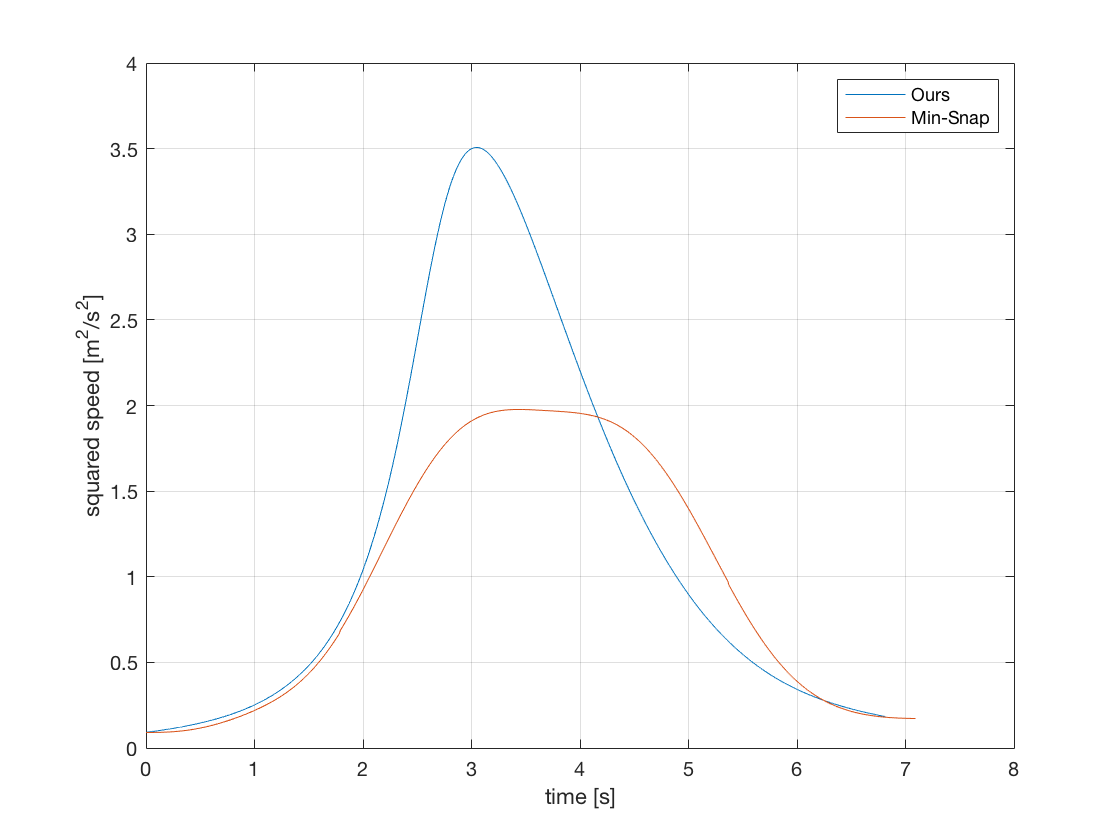}
         \caption{A comparison of the squared speed profile generated by a minimum snap optimization to our proposed method.}
          \label{fig:trajectories}
    \end{subfigure}%   
    \caption{Experiments in simulation to validate the proposed approach.}
\end{figure*}

\begin{figure*}
\centering
        \begin{subfigure}[t]{0.25\textwidth}
        \centering
        \includegraphics[width=4cm]{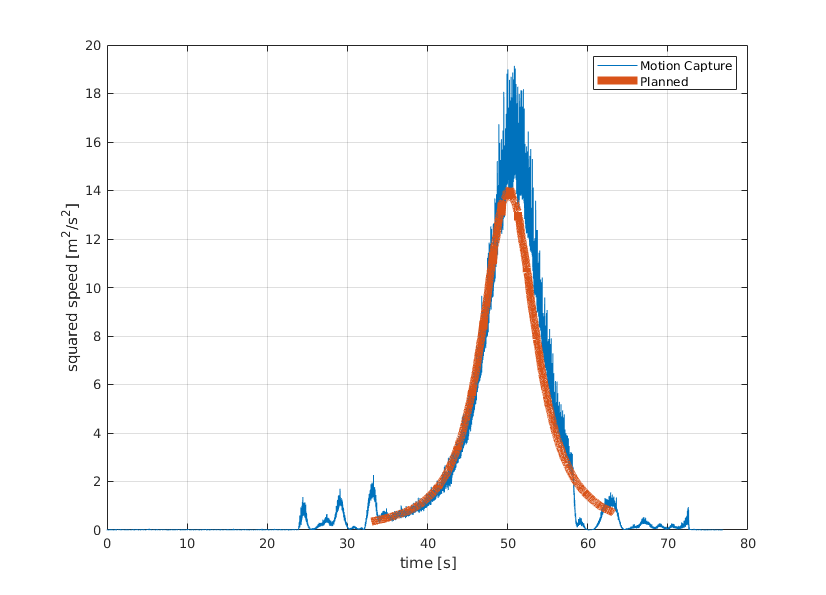}
         \caption{A comparison of the square speed profile obtained from flight data to the profile computed by the proposed algorithm.}
          \label{fig:real_experiment_sp_profile}
    \end{subfigure}
        ~
         \begin{subfigure}[t]{0.25\textwidth}
        \centering
        \includegraphics[width=4cm]{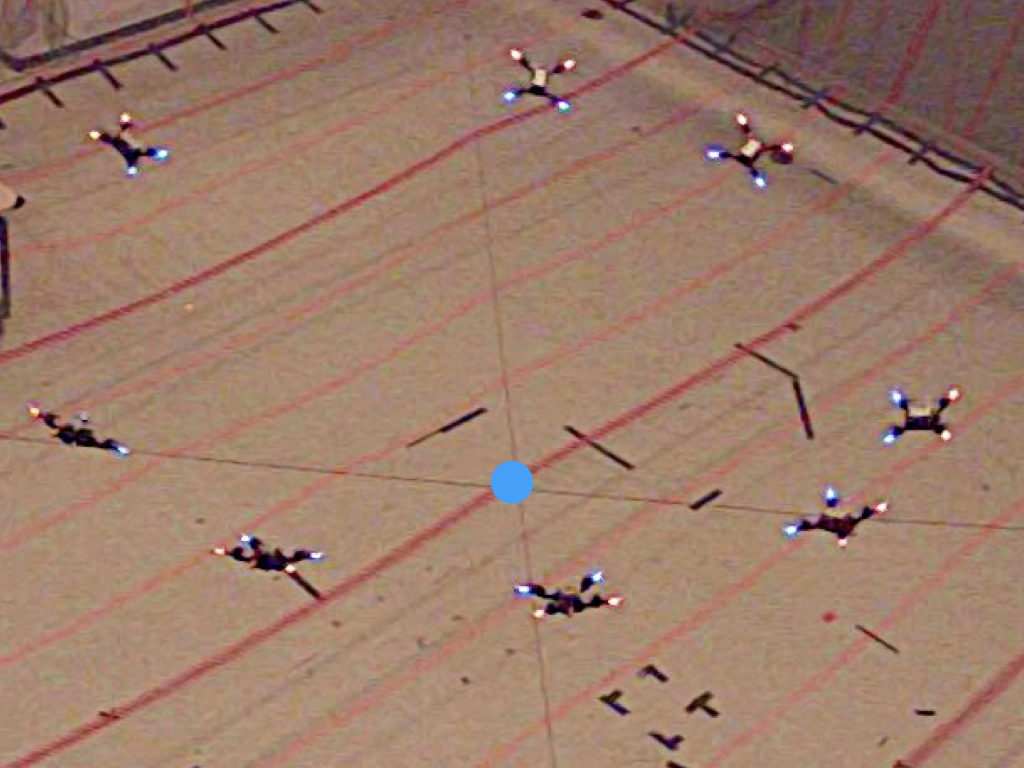}
         \caption{Spiral trajectory (clockwise) followed by the quadrotor is shown. The blue sphere represents the desired target to be kept in field of view.}
          \label{fig:real_experiment}
    \end{subfigure}%
          ~
         \begin{subfigure}[t]{0.25\textwidth}
        \centering
        \includegraphics[width=4cm]{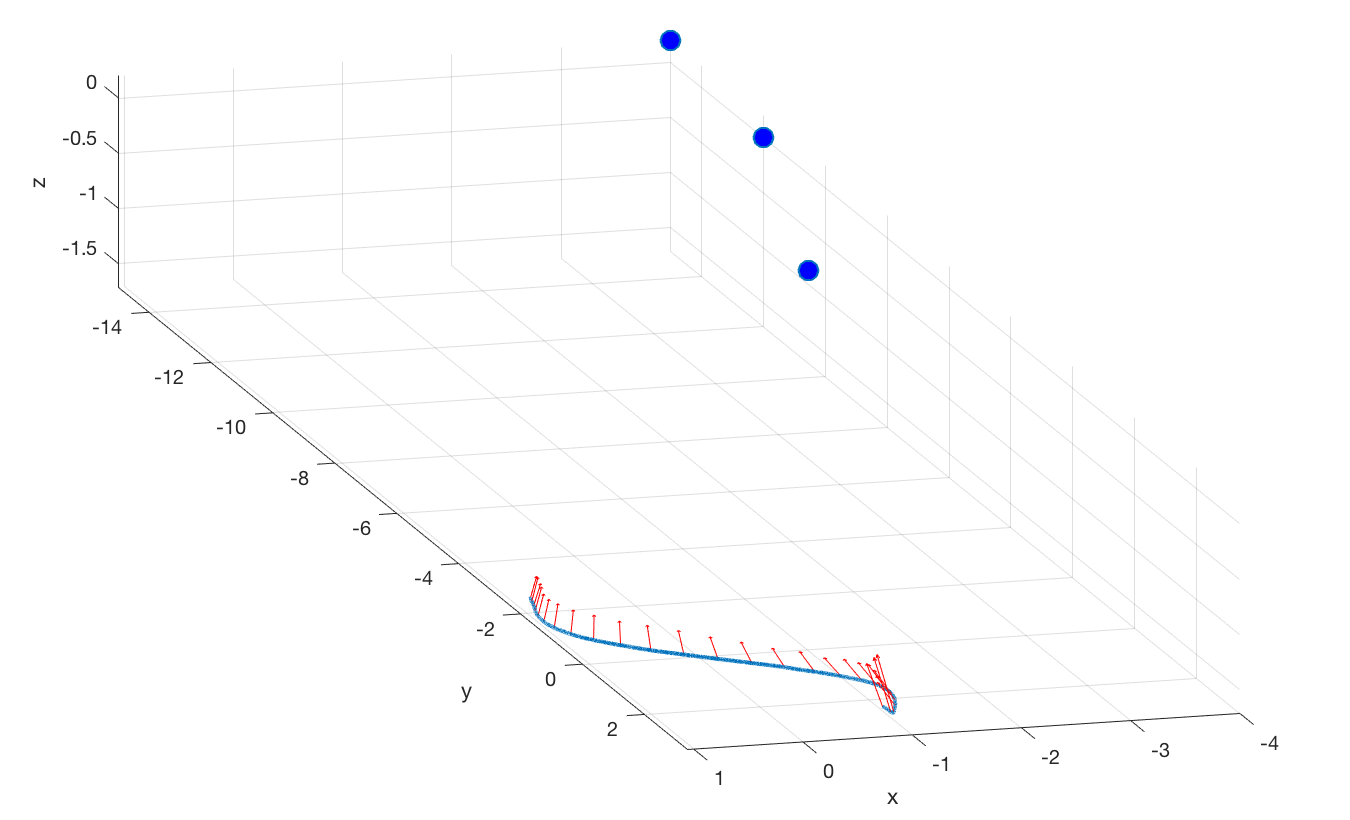}
         \caption{Slalom trajectory followed by the quadrotor. The blue sphere represent the desired targets to keep in view.}
          \label{fig:real_experiment_2}
    \end{subfigure}%
        \caption{Experiments on a real quadrotor to validate the proposed approach.} 
\end{figure*}

\section{Experiments}
\label{sec:experiment}

%\subsection{Simulation}

We first present two experiments in simulation.
The first experiment verifies that trajectories generated using our approach satisfy the field of view constraint.
This can be seen in Fig. \ref{fig:sim_explanation} where the red arrows show the attitude of the camera and the sphere represents the target.
In this experiment, we also compare the generated squared speed profile from our approach to one generated using a traditional minimum snap approach.
For fair comparison, we make a small modification (the search for the final time includes the field of view as a stopping condition) to the algorithm proposed in \cite{RichterBryRoy}.
%We also provide the initial time ratios to their method using the waypoint time from ours.
The comparison of this can be seen in Fig. \ref{fig:squared_speed_prof} which shows that the trajectory generated using the proposed approach is more aggressive.
To validate the system, we first simulate the evolution of the state using the physics simulation presented in \cite{guerra2019flightgoggles}. 
We randomly choose a (fixed) set of waypoints, over 100 simulations, for the quadrotor to follow and a target that it has to keep in view.
%For simulation, we chose to randomly sample 4 waypoints over 100 simulations. 
Of the 100 random experiments performed using the Algorithm \ref{alg:backward-forward} and \ref{alg:propogation-procedure}, a total of 87 were deemed feasible  (shown in Fig. \ref{fig:trajectories}).
We use a state of the art controller presented in \cite{TalKaraman} to track the trajectories generated by the algorithm presented in Section \ref{sec:approach}.
The simulation is considered successful if the quadrotor is able to keep the target in view and track the reference trajectory with a reasonable accuracy (10 cm in this experiment).
In this experiment, the controller successfully tracks 84 trajectories.
We illustrate the validation of our algorithm on a custom built quadrotor platform by examining the performance of the quadrotor at a task involving  tracking a spiral trajectory while maintaining a desired point within field of view of its on-board camera. 
We use the trajectory tracking controller described in \cite{TalKaraman}, and motion capture system to obtain ground truth measurements.  
Fig. \ref{fig:real_experiment_sp_profile} effectively shows that the planned squared speed profile lies within dynamic capabilities of the quadrotor, whereas Fig \ref{fig:real_experiment} and Fig \ref{fig:real_experiment2} show the quadrotor maintain a point  and multiple points within field of view throughout the duration of the trajectory respectively.

\section{Conclusion}

This paper considered the time optimal path parametrization problem for quadrotors with field of view constraints. We proposed an algorithm based on the insight that field of view requirements translate into convex constraints on the square speed profile. Future work will additionally involve optimization over headings, as well as consideration of alternate methods of recovering a smooth estimate of the trajectory of optimal orientations from the output of stage one of our approach.   

\clearpage

\bibliographystyle{IEEEtran}
% argument is your BibTeX string definitions and bibliography database(s)
\bibliography{IEEEabrv,./references.bib}

\end{document}